\theoremstyle{plain}
\newtheorem{theorem}{Theorem}[section]
\newtheorem{corollary}[theorem]{Corollary}
\theoremstyle{definition}
\newtheorem{definition}[theorem]{Definition}
\theoremstyle{remark}
\begin{document}

\title{Deep Index Policy for Multi-Resource Restless Matching Bandit and Its Application in Multi-Channel Scheduling}

\author{Nida Zamir}
 \email{nidazamir@tamu.edu}
\affiliation{%
 \institution{Texas A\&M University}
 \city{College Station}
 \state{Texas}
 \country{USA}}

\author{I-Hong Hou}
 \email{ihou@tamu.edu}
\affiliation{%
 \institution{Texas A\&M University}
 \city{College Station}
 \state{Texas}
 \country{USA}}

\renewcommand{\shortauthors}{Zamir and Hou}
\begin{abstract}
Scheduling in multi-channel wireless communication system presents formidable challenges in effectively allocating resources. To address these challenges, we {investigate} a multi-resource restless matching bandit (MR-RMB) model for heterogeneous resource systems with an objective of maximizing long-term discounted total rewards while respecting resource constraints. We have also generalized to applications beyond multi-channel wireless. We discuss the Max-Weight Index Matching algorithm, which optimizes resource allocation based on learned {partial index}es. We have derived the policy gradient theorem for index learning. Our main contribution is the introduction of a new Deep Index Policy (DIP), an online learning algorithm tailored for MR-RMB. DIP learns the {{partial index}} by leveraging the policy gradient theorem for restless arms with convoluted and unknown transition kernels of heterogeneous resources. We demonstrate the utility of DIP by evaluating its performance for three different MR-RMB problems. Our simulation results show that DIP indeed learns the {partial index}es efficiently. 
\end{abstract}

\begin{CCSXML}
<ccs2012>
   <concept>
       <concept_id>10003752.10010070.10010071.10010261.10010272</concept_id>
       <concept_desc>Theory of computation~Sequential decision making</concept_desc>
       <concept_significance>500</concept_significance>
       </concept>
 </ccs2012>
\end{CCSXML}
\ccsdesc[500]{Theory of computation~Sequential decision making}

\keywords{Online learning, multi-armed bandit, multi-channel scheduling}
\maketitle

\section{Introduction}
\label{Introduction}

Scheduling in wireless communication systems is a critical aspect that plays a pivotal role in optimizing resource utilization and enhancing system performance. The importance of scheduling stems from the inherent limitations and complexities of wireless channels, including limited bandwidth, varying channel conditions, and the presence of interference. Effective scheduling strategies enable efficient allocation of scarce resources, such as bandwidth and transmission slots, to users or applications, thereby maximizing system throughput, minimizing latency, and enhancing overall network performance \cite{sheng2014data}. Scheduling problems have been extensively studied in the literature in various applications such as Age of Information (AoI) \cite{kadota2018scheduling}, Quality of Experience (QoE)~\cite{anand2018whittle}, Mobile Edge Computing (MEC) for wireless Virtual Reality (VR)~\cite{yang2018communication} and opportunistic scheduling problems for downlink data traffic \cite{ aalto2015whittle}. 

Most modern wireless systems, such as those employing orthogonal frequency-division multiple access (OFDMA), have the capability to schedule different users for transmissions on different channels or subcarriers simultaneously. Scheduling problems in such multi-channel wireless networks are especially difficult because a user may experience different channel qualities on different channels. Hence, a controller of the system needs to decide not only whom to schedule, but also which channels to schedule each user on.

In this paper, we {investigate} a multi-resource restless matching bandit (MR-RMB) model to address challenges in multi-channel scheduling problems. This model considers each user as a restless arm whose state, such as queue status, packet delay, recent deliveries, evolves according to both application behaviors and the perceived network services. In each time step, a controller observes the states of all restless arms and then matches each restless arm to a resource (channel), subject to the capacity of the resource. In addition to multi-channel scheduling problems, we also show that the MR-RMB model can be applied to a wide range of applications such as advertisement placements on social media websites and call center scheduling.

{The MR-RMB model extends the restless bandit problem, which is a special case with a single resource. The restless bandit problem is intractable due to its exponentially growing state space}.  To address this challenge, we adopt the technique in Zou et al. \cite{zou2021minimizing}, which proposes a {{partial index}} policy for minimizing AoI in multi-channel wireless networks. This policy calculates a {{partial index}} between each restless arm $n$ and each resource $h$. An important feature of the {partial inde}x is that it only depends on the state of the restless arm $n$ and is independent of the states of all other restless arms. This feature makes calculating the {{partial index}} tractable. We generalize this policy for generic MR-RMB models.

An important limitation of the {partial index} policy is that it requires the complete knowledge of the transition kernel of each restless arm, which, in the case of multi-channel scheduling, consists of the application behaviors and channel qualities of each user, to calculate the {partial index}. In practice, such knowledge may not be available. We propose using deep reinforcement learning for the controller to learn the {partial index}es on the fly without any prior knowledge about the transition kernel of each restless arm. We show that finding the {partial index}es is equivalent to finding the optimal control policy for a family of auxiliary Markov decision processes (MDP). We further derive the policy gradient theorem for the entire family of auxiliary MDPs.

 
Based on the policy gradient theorem, we propose Deep Index Policy (DIP), a new deep reinforcement learning policy that learns the {partial index}es by using actor-critic networks. The utility of DIP is comprehensively evaluated in three different MR-RMB problems, two problem are scheduling problems in multi-channel wireless networks and the third one is an advertisement placement problem in social media websites. For scenarios where {partial index}es can be calculated, DIP indeed converges to {partial index}es efficiently. For other scenarios, DIP significantly outperforms other policies.

 {Our primary contributions are the introduction of DIP, which generalizes the {partial index} policy for generic MR-RMB models, and the development of a learning algorithm that efficiently computes the {partial index} without requiring prior knowledge of the system. Zou et al. \cite{zou2021minimizing} define {partial index} but calculating the {partial index} requires a full knowledge about the system, which may not be available in practice. In contrast, we have a learning algorithm that can efficiently find the {partial index} without knowing anything about the system in advance. Additionally, we advance the policy gradient theorem by incorporating multiple resources, a more complex scenario than the single-resource setting explored by Nakhleh and Hou \cite{nakhleh2022deeptop}. Unlike the restless bandit problem, where the comparison is between activation and idle, our work involves comparing activation on resource $h$ with activation on any other resource. This complexity makes a policy gradient theorem more challenging to drive.}

The rest of the paper is organized as follows: Section~\ref{relatedwork} reviews recent studies on wireless scheduling and restless bandits. Section~\ref{sec:System Model} introduces the model for MR-RMB. Section~\ref{sec:DecompositionIndexPolicy} discusses the {partial index} policy for MR-RMB. Section~\ref{sec:PGTforIndexLearning} establishes the policy gradient theorem for finding {partial index}es. Section~\ref{sec:DIP} introduces the deep index policy that finds {partial index}es through deep reinforcement learning. Section~\ref{sec:simulation} presents the simulation results. Finally, Section~\ref{sec:conclusion} concludes the paper.

\subsection{Notations}
Throughout this paper, we use $\vec{x}$ to denote the vector containing $[x_1, x_2, \dots]$. We use $\vec{x}_{-m}$ to denote the vector $\vec{x}$ without $x_m$, i.e., $[x_1,x_2,\dots, x_{m-1}, x_{m+1}, \dots]$, and use $[\vec{x}_{-m},y]$ to denote the vector $\vec{x}$ with $x_m$ being replaced by $y$, i.e., $[x_1,x_2,\dots, x_{m-1}, y, x_{m+1}, \dots]$

\section{Related Work}
\label{relatedwork}
Scheduling problems have been extensively studied in wireless networks, with many focusing on scenarios with a single shared resource \cite{yang2018communication,kadota2018scheduling}, potentially limiting the generalizability of their findings. However, significant research has also been investigated in multi-channel wireless networks \cite{castro2009qos, cheng2009complexity, kim2014scheduling, wan2016joint, li2018high}. Gopalan, Caramanis and Shakkottai explore an online scheduling algorithm to allocate multiple channels for a queueing system \cite{gopalan2012wireless}. Krishnasamy et al. examine the optimization of energy costs in systems with multiple Base Stations (BSs) \cite{krishnasamy2018augmenting}. Bodas et al. focus on the allocation of multiple channels for the downlink of cellular wireless networks \cite{bodas2013scheduling}. Additionally, Sombabu and Moharir \cite{sombabu2020age}, Xu et al. \cite{xu2016mdp}, Talak, Karaman and Modiano \cite{talak2020improving},  Li and Duan \cite{li2023age} and Fountoulakis et al. \cite{fountoulakis2023scheduling} address the AoI minimization problem to enhance reliability under unstable conditions. All these studies assume that the application behaviors and channel conditions are known in advance.

Some researchers have explored learning approach for scheduling problems with unknown application behaviors or channel conditions. Huang et al. optimize task offloading in the downlink of mobile edge computing systems~\cite{huang2020scheduling}. Leong et al. optimize sensor transmission scheduling for remote state estimation in cyber-physical systems.\cite{leong2020deep}. Zakeri et al. develop transmission scheduling policies to minimize AoI \cite{zakeri2023minimizing}. Naderializadeh et al. study resource management in the downlink of a wireless network with multiple access points (APs) transmitting data to user equipment devices (UE), where each UE can only be served by one AP at a time during scheduling \cite{naderializadeh2021resource}. These studies focus on learning scheduling strategies for a single shared resource. They cannot be easily extended to multi-channel wireless networks.

Scheduling problem has frequently been formulated as a Restless Multi-armed Bandit (RMAB) problem~\cite{xu2019scheduling,chen2022uncertainty}. RMAB is notoriously hard to solve as the size of its state space grows exponentially with the number of arms. The Whittle index is widely embraced as a scalable solution for addressing RMAB problems and has been extensively studied in various applications \cite{mate2020collapsing,tripathi2019whittle,borkar2017opportunistic,nakhleh2022deeptop}. We note that in the RMAB literature there is also a line of work on multi-action bandits considering a single resource \cite{hodge2015asymptotic,killian2021beyond}. {Some of the researcher also considered multiple resources. Wu et al.~\cite{wu2022towards} calculate the index via brute force, which is not feasible for larger state spaces or unknown system behaviors. Simchi-Levi, Sun and Wang~\cite{simchi2023online} do not consider states of the user. Gai, Krishnamachari and Liu~\cite{gai2011combinatorial} consider states, but during the assignment, it does not look at the states. Assignments are made based on the best performance for the long-term reward, and then these assignments are maintained. Therefore, it is only looking for a stationary assignment not based on states.} Zou et al. \cite{zou2021minimizing} study the heterogeneous and unreliable channels to minimize the AoI problem where they have considered multiple dual costs of each resources. In contrast, our work is more general and is applicable to any problem that admits heterogeneous resources.

\section{System Model}\label{sec:System Model}
{In this section, we explore the generic MR-RMB model and its application to various networked systems, including multi-channel wireless networks.}


We consider a system composed of $N$ restless arms (wireless clients), numbered by $n = 1, 2, \dots, N$, $H$ heterogeneous resources (wireless channels), numbered as $h=1, 2, \dots, H$, and a controller (cellular base station) in charge of matching resources to restless arms. Each resource $h$ has a capacity of serving up to $C_h$ restless arms in each time step. To simplify notation, we also introduce a \emph{Null} resource $h=0$ with infinite capacity, i.e., $C_0=\infty$.  In each time step $t$, a controller observes the state of each arm $n$, denoted by $s_{n,t}\in S_n$, and then chooses a resource $a_{n,t}\in  \mathbb{A}$, where  $\mathbb{A}=\{0,1,2,\dots, H\}$, to serve each arm $n$ such that at most $c_h$ arms are served by resource $h$. If $a_{n,t} = 0$, then arm $n$ is not served by any resource in time step $t$. After being served by resource $a_{n,t}$, arm $n$ generates a reward $r_{n,t}$ and changes its state to $s_{n,t+1}$ in the next time step. We assume that the reward and state evolution of each arm follows a MDP
. Specifically, when arm $n$ is in state $s_n$ and is being served by resource $a_n$, then it generates a random reward with unknown mean $R_n(s_n, a_n)$ and changes its state to state $s'_n$ with unknown probability $P_n(s'_n|s_n,a_n)$. 


There are many real-world scenarios where the controller needs to decide not only which restless arms to serve, but also which resource to use for each arm. We demonstrate three examples of such systems below


\textbf{Example 1:} In multi-channel wireless systems, a base station (BS) serves $N$ data flows (restless arms) with H heterogeneous channels (resources) and $p_{n,h}$ represents the channel quality between user $n$ and channel $h$. The model for state and reward of a data flow is specified by its application and can be defined based on, for example, queue status, packet delay, data freshness, etc. Each data flow experiences different $p_{n,h}$ on different channels. In each time step, the BS chooses data flows to transmit over each channel.
    
\textbf{Example 2:}  In social media website advertisement, the website has three places to display advertisements: the overhead banner, the sidebar, and the newsfeed. Hence, $H=3$ and $C_h$ represents the number of spots in each place. Each restless arm is an advertisement whose state includes the time and place it was last displayed. In each time step, the website administrator determines whether and where to display each advertisement. The reward of each advertisement, measured in terms of the click-through rate, depends on the state of the advertisement and the place that it is displayed in.

\textbf{Example 3:} Within a Maternal and Child Health Program, call center services employ live voice scheduling to provide timely preventive care information to expecting and new mothers throughout pregnancy and up to one year post-birth. Call center has $H$ callers available, each caller has a different language expertise and can call up to $C_h$ expecting/new mothers each week. Each expecting/new mother is a restless arm with her own language preference, and her state is either engaged in preventive care or not. Each time step, the call center determines which caller to contact to increase her engagement in preventive care. The effectiveness of a call depends on the mother's state and whether the caller's language expertise matches the mother's. A recent study \cite{verma2023restless} has studied the special case when all callers have the same language expertise. 

\sloppy {The controller employs a matching policy $\vec{\pi}$ to match arms to resources. Let $\vec{s}_t:=[s_{1,t}, s_{2,t},\dots, s_{N,t}]$ and $\vec{a}_t:=[a_{1,t}, a_{2,t},\dots,a_{N,t}]$, then the matching policy can be viewed as a function $\vec{\pi}$ that determines $\vec{a}_t=\vec{\pi}(\vec{s}_t):=[\pi_{1}(\vec{s}_t), \pi_{2}(\vec{s}_t),\dots]$. We evaluate $\vec{\pi}$ by its long-term discounted total reward. Specifically, let $\beta$ be the discount factor and let $\mathbb{I}(\cdot)$ be the indicator function, then the controller's goal is to find the optimal $\vec{\pi}$ for the following optimization problem:
\begin{align}
    \mbox{\textbf{SYSTEM:}} \hspace{10pt} \max_{\vec{\pi}}\mbox{ } & E[\sum_{t=0}^\infty \sum_{n=1}^N \beta^tR_n\big(s_{n,t}, \pi_n(\vec{s}_{t})\big)] \label{equation:SYSTEM1}\\
    \mbox{s.t. } & \sum_{n=1}^N \mathbb{I}\big(\pi_n(\vec{s})=h\big)\leq C_h, \forall \vec{s}, h, \label{equation:SYSTEM2}\\
    \mbox{and } & \pi_n(\vec{s})\in\{0,1,2,\dots, H\}, \forall \vec{s},n.\label{equation:SYSTEM3}
\end{align}}

\section{Preliminary: Decomposition and Index Policy}\label {sec:DecompositionIndexPolicy}
 
The problem \textbf{SYSTEM} is intractable to solve because the state space of $\vec{s}$ is the product of the state spaces of $s_n$ and its size increases exponentially with $N$. A recent study \cite{zou2021minimizing} has employed a decomposition technique to develop an index policy for the problem of minimizing AoI in multi-channel wireless systems. In this section, we generalizes its result for generic MR-RMB problems.

\subsection{Lagrange Decomposition}\label {sec:LangrangeDecomposition}
 To simplify the \textbf{SYSTEM} problem, we first relax the per-$\vec{s}$ constraint (\ref{equation:SYSTEM2}) to an average constraint:
\begin{align}
    E\left[\sum_{t=0}^\infty\sum_{n=1}^N \beta^t\mathbb{I}(\pi_n(\vec{s_t})=h)\right]\leq \sum_{t=0}^\infty \beta^tC_h=\frac{C_h}{1-\beta}, \forall h, \label{equation:RelaxedSYSTEM}
\end{align}
and then introduce a Lagrange multiplier $\vec{\lambda}:=[\lambda_1, \lambda_2, \dots, \lambda_H]$ for this relaxed constraint. We can view $\vec{\lambda}$ as a shadow price so that the controller needs to pay $\lambda_h$ for every arm that is matched to resource $h$. The Lagrangian of the relaxed problem is then
\begin{align}
\textbf{Lagr($\vec{\lambda}$):} \hspace{1pt} \max_{\vec{\pi}}\mbox{ }  &E\left[\sum_{t=0}^\infty \sum_{n=1}^N \beta^t\Big(R_n\big(s_{n,t}, \pi_n(\vec{s}_t)\big)-\lambda_{\pi_n(\vec{s}_t)}\Big)\right],\nonumber \\&\mbox{s.t. }\pi_n(\vec{s})\in\{0,1,2,\dots, H\}, \forall \vec{s},n. \label{equation:Lagrange1}
\end{align}
An important feature of \textbf{Lagr($\vec{\lambda}$)} is that it can be decomposed into $N$ subproblems, one for each arm. Specifically, let $\sigma_{n,\vec{\lambda}}(\cdot)$ be the optimal solution to the \textbf{Arm$_n(\vec{\lambda})$} problem,
\begin{align}
    \mbox{\textbf{Arm$_n(\vec{\lambda})$:}} \hspace{1pt} \max_{\sigma_n}\mbox{ } & E\left[\sum_{t=0}^\infty \beta^t\Big(R_n\big(s_{n,t}, \sigma_n(s_{n,t})\big)-\lambda_{\sigma_n(s_{n,t})}\Big)\right],\nonumber \\
    &\mbox{s.t. }\sigma_n(s_n)\in\{0,1,2,\dots, H\}, \forall s_n, \label{equation:Arm1}
\end{align}
then choosing $\pi_n(\vec{s})= \sigma^{*}_{n,\vec{\lambda}}(s_n)$ solves \textbf{Lagr($\vec{\lambda}$)}.

\sloppy There are three important advantages of the above decomposition. First, this decomposition addresses the curse of dimensionality since each \textbf{Arm$_n(\vec{\lambda})$} problem only involves the state space of arm $n$. Second, the decomposition preserves optimality. Specifically, when $\vec{\lambda}$ is chosen as the minimizer of $L(\vec{\lambda}):=\min_{\vec{\lambda}}\max_{\vec{\pi}}E[\sum_{t=0}^\infty \sum_{n=1}^N \beta^t\Big(R_n\big(s_{n,t}, \pi_n(\vec{s}_t)\big)-\lambda_{\pi_n(\vec{s}_t)}\Big)]$, then the optimal solution to \textbf{Arm$_n(\vec{\lambda})$} problem is also the optimal solution to \textbf{SYSTEM} with the relaxed constraint Eq. (\ref{equation:RelaxedSYSTEM}). Finally, the minimizer of $L(\vec{\lambda})$ can be found iteratively through a simple gradient algorithm: 
\begin{align}
    \lambda_h^{(k+1)}=\Big[\lambda_h^{(k)}+\rho_k\Big(E[\sum_{t=0}^\infty\sum_{n=1}^N \beta^t\mathbb{I}(\sigma_{n,\lambda_n}&(s_{n,t})=h)]-\frac{C_h}{1-\beta}\Big)\Big]^+,
\end{align}
where $\lambda_h^{(k)}$ is the value of $\lambda_h$ in the $k-$th iteration, $\rho_k$ is a properly chosen step size, and $[x]^+:=\max\{x,0\}$.

\subsection{{Partial Index} and Max-Weight Index Matching}\label{sec:PWInMWIM}

One important drawback of Lagrange decomposition is that it needs to relax the per-$\vec{s}$ constraint (\ref{equation:SYSTEM2}). For the special case when there is only one resource, i.e., $H=1$, the Whittle index overcomes this drawback by producing a policy that both satisfies the per-$\vec{s}$ constraint (\ref{equation:SYSTEM2}) and is asymptotically optimal under some mild conditions. Following the approach in Zou et al. \cite{zou2021minimizing}, one can define a \emph{{partial index}} for each arm $n$, each state $s_n$, and each resource $h$. For a given arm $n$, it defines a \emph{{partial index}} for each state $s_n$ of $n$ and each resource $h$, denoted by $w_{n,h}(s_n,\vec{\lambda}_{-h})$, as the following:

\begin{definition}\label{pw} [{Partial Index}]
    Given an arm $n$ and a Lagrange multiplier $\vec{\lambda}$, the {partial index} for state $s_n$ and resource $h$ is defined as
    \begin{align}
        w_{n,h}(s_n,\vec{\lambda}_{-h}):= sup\{y|\sigma^{*}_{n,[\vec{\lambda}_{-h},y]}(s_n)=h\}.
    \end{align}
\end{definition}

Intuitively, $w_{n,h}(s_n,\vec{\lambda}_{-h})$ can be viewed as the highest shadow price that arm $n$ is willing to pay to be matched to resource $h$, instead of any other resources, when its state is $s_n$. Hence, arm $n$ should prefer resource $h$ over others as long as $\lambda_h\leq w_{n,h}(s_n,\vec{\lambda}_{-h})$. Arm $n$ is said to be \emph{indexable} when it indeed exhibits such a behavior:

\begin{definition}\label{index} [Indexability]
    An arm $n$ is indexable if, for any $s_n$, $\vec{\lambda}$, and $h$, we have, for any $y\leq w_{n,h}(s_n,\vec{\lambda}_{-h})$, $\sigma^{*}_{n,[\vec{\lambda}_{-h},y]}(s_n)=h$.
\end{definition}
      \begin{algorithm}[tb]
        \caption{Max-Weight Index Matching} \label{alg:max matching}
        \begin{algorithmic}
          \STATE Initialize $\vec{\lambda}$
          \FOR{t=0, 1, 2, \dots}
          \STATE Calculate $w_{n,h}(s_{n,t},\vec{\lambda}_{-h})$ for all $n, h$
          \STATE Create a bipartite graph with $N$ arm nodes and $H+1$ resource nodes
          \STATE Add an edge with weight $w_{n,h}(s_{n,t},\vec{\lambda}_{-h})$ between arm node $n$ and resource node $h$
          \STATE Find the max-weight matching and match arms to resources accordingly
          \STATE $\lambda_h\leftarrow \Big[\lambda_h+\rho_t\Big(\sum_n\mathbb{I}(w_{n,h}(s_{n,t}\vec{\lambda}_{-h})>\lambda_h)-C_h\Big)\Big]^+,\forall h$
          \ENDFOR
        \end{algorithmic}
      \end{algorithm}

  
We now discuss the matching algorithm. Given $\vec{\lambda}$, the algorithm first calculates the {partial index} for each $n$, $s_n$, and $h$. In each time step $t$, the algorithm creates a bipartite graph with $N+H+1$ nodes, such that each arm and each resource is a node. There is an edge between each arm $n$ and each resource $h$ with weight $w_{n,h}(s_{n,t},\vec{\lambda}_{-h})$. The algorithm then finds the max-weight matching between the nodes, with the constraint that each resource $h$ can be matched to at most $C_h$ arms. We describe the algorithm along with a simplified update rule for $\vec{\lambda}$ in Alg.~\ref{alg:max matching}. Zou et al. \cite{zou2021minimizing} has proved that Alg.~\ref{alg:max matching} is asymptotically optimal for a specific wireless scheduling problem of AoI minimization. However, to calculate the {partial index}, one needs to know the transition kernels of each restless arms. In the next two sections, we will introduce an online reinforcement learning algorithm that learns the {partial index} for restless arms with convoluted and unknown transition kernels.

\section{Policy Gradient Theorem for Index Learning}\label{sec:PGTforIndexLearning}



{In this section, we study the fundamental properties of the {partial index} $w_{n,h}(s_n,\vec{\lambda}_{-h})$. We will show that $w_{n,h}(s_{n},\vec{\lambda}_{-h})$ is the optimal solution to an auxiliary MDP. Nakhleh and Hou \cite{nakhleh2022deeptop} has derived the policy gradient theorem for finding the Whittle index, which is equivalent to the special case of $H=1$ in our paper. We expand Nakhleh and Hou \cite{nakhleh2022deeptop} to address the more general case of multiple resources and derive the corresponding policy gradient theorem.}

{Throughout this section, we will focus on studying $w_{n,h}(s_{n},\vec{\lambda}_{-h})$ for fixed $n$, $h$, and $\vec{\lambda}_{-h}$.
We also assume that the {partial index}es for all other resources, that is, $w_{n,h^{'}}(s_{n},\vec{\lambda}_{-h^{'}})$ for all $h^{'} \neq h$, are known and given. We drop the subscript $n$ from all notations for better clarity.}

\sloppy{Let $w^{\phi_h}_{h}(s)$ be a function with parameter vector $\phi_h$ that predicts the {partial index}  $w_{h}(s,\vec{\lambda}_{-h})$. For a given resource $h$ of an \textbf{Arm$_n([\vec{\lambda}_{-h},\lambda_h])$} problem, we can construct the following policy, which we denote by $\sigma_{\lambda_h}^{\phi_h}(s)$:  First, the policy compares  $w^{\phi_h}_{h}(s)$ with $\lambda_h$  and sets $\sigma_{\lambda_h}^{\phi_h}(s)=h$ if $w^{\phi_h}_{h}(s)\geq \lambda_h$.  Second, if $w^{\phi_h}_{h}(s)\leq \lambda_h$, then the policy finds the largest $h^{'}\neq h $ with $w_{h^{'}}(s,\vec{\lambda}_{-h^{'}})\geq \lambda_{h^{'}}$ and sets $\sigma_{\lambda_h}^{\phi_h}(s)=h^{'}$. Finally, if $w^{\phi_{h}}_{h}(s)<\lambda_{h}$ and $w_{h^{'}}(s,\vec{\lambda}_{-h^{'}})< \lambda_{h^{'}}$ for all $h^{'} \neq h$, then policy sets $\sigma_{\lambda_h}^{\phi_h}(s)=0$ .}

\sloppy {We can now define the state-action function of applying $\sigma_{\lambda_h}^{\phi_h}(s)$ to \textbf{Arm$_n([\vec{\lambda}_{-h},\lambda_h])$} problem, which we  denote by  $\mathcal{Q}^{\phi_h}(s,a,\lambda_h)$. The Bellman equation of $\mathcal{Q}^{\phi_h}(s,a,\lambda_h)$ is defined as:}
   \begin{align} \label{eq:qFunction}
\mathcal{Q}^{\phi_h}(s,a,\lambda_h) = &R(s,a)- \lambda_{a} \nonumber \\
+&\beta \sum_{s^{'}\in S} P(s^{'}|s,a) \mathcal{Q}^{\phi_h}(s^{'},\sigma_{\lambda_h}^{\phi_h}(s^{'}),\lambda_h) )
\end{align}

We then have the following Corollary:

\begin{corollary}\label{cor:1} If arm $n$ is indexable, then setting $w^{\phi_h}_{h}(s)$ to be its {partial index} $w_{h}(s,\vec{\lambda}_{-h})$ maximizes $\mathcal{Q}^{\phi_h}(s,a,\lambda_h)$ for any $\lambda_h$.
 \end{corollary}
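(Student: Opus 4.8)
The plan is to derive Corollary~\ref{cor:1} from a more fundamental fact about the auxiliary MDP, namely that the policy $\sigma_{\lambda_h}^{\phi_h}$ induced by a candidate index function $w^{\phi_h}_h$ coincides with the optimal policy $\sigma^*_{n,[\vec{\lambda}_{-h},\lambda_h]}$ of \textbf{Arm$_n([\vec{\lambda}_{-h},\lambda_h])$} exactly when $w^{\phi_h}_h$ equals the true {partial index} $w_h(\cdot,\vec{\lambda}_{-h})$, given the stated assumption that the partial indexes $w_{h'}(\cdot,\vec{\lambda}_{-h'})$ for all $h'\neq h$ are correct. First I would unpack the construction of $\sigma_{\lambda_h}^{\phi_h}(s)$: it serves $s$ on resource $h$ iff $w^{\phi_h}_h(s)\geq\lambda_h$, otherwise it defers to the ``known'' indexes on the other resources, selecting the largest $h'\neq h$ with $w_{h'}(s,\vec{\lambda}_{-h'})\geq\lambda_{h'}$ and $0$ if none qualifies. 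The key observation is that this is precisely the decision rule that the {partial index} property (Definition~\ref{pw}, together with indexability, Definition~\ref{index}) prescribes for the optimal policy: indexability guarantees that $\sigma^*_{n,[\vec{\lambda}_{-h},\lambda_h]}(s)=h$ whenever $\lambda_h\le w_h(s,\vec{\lambda}_{-h})$, and symmetrically the roles of the other resources are governed by their own indexes. So when $w^{\phi_h}_h\equiv w_h(\cdot,\vec{\lambda}_{-h})$, the behavioral policy $\sigma_{\lambda_h}^{\phi_h}$ agrees with $\sigma^*_{n,[\vec{\lambda}_{-h},\lambda_h]}$ state-by-state, for every $\lambda_h$.

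Second, I would connect this to the $\mathcal{Q}$-function. Observe that $\mathcal{Q}^{\phi_h}(s,a,\lambda_h)$ as defined by the Bellman equation~\eqref{eq:qFunction} is the state-action value of taking action $a$ once and thereafter following $\sigma_{\lambda_h}^{\phi_h}$ in the MDP \textbf{Arm$_n([\vec{\lambda}_{-h},\lambda_h])$}. Varying $\phi_h$ changes only the continuation policy $\sigma_{\lambda_h}^{\phi_h}$ (the immediate term $R(s,a)-\lambda_a$ and the transition kernel $P(s'|s,a)$ do not depend on $\phi_h$). By the standard policy-evaluation/optimality comparison for MDPs, the continuation value $\mathcal{Q}^{\phi_h}(s',\sigma_{\lambda_h}^{\phi_h}(s'),\lambda_h)$ — which is the value function $V^{\sigma_{\lambda_h}^{\phi_h}}(s')$ of the policy $\sigma_{\lambda_h}^{\phi_h}$ — is pointwise maximized over all policies by the optimal policy of the MDP, and hence $\mathcal{Q}^{\phi_h}(s,a,\lambda_h)\le \mathcal{Q}^*(s,a,\lambda_h)$ for every $(s,a,\lambda_h)$, where $\mathcal{Q}^*$ is the optimal state-action value of \textbf{Arm$_n([\vec{\lambda}_{-h},\lambda_h])$}. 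Since setting $w^{\phi_h}_h$ to the true {partial index} makes $\sigma_{\lambda_h}^{\phi_h}=\sigma^*_{n,[\vec{\lambda}_{-h},\lambda_h]}$, the inequality becomes an equality, so this choice attains the maximum of $\mathcal{Q}^{\phi_h}(s,a,\lambda_h)$ over all parameterizations, for every $\lambda_h$.

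I would organize the write-up as: (i) a lemma asserting $\sigma_{\lambda_h}^{\phi_h}=\sigma^*_{n,[\vec{\lambda}_{-h},\lambda_h]}$ when $w^{\phi_h}_h=w_h(\cdot,\vec{\lambda}_{-h})$, proved by case analysis on $s$ using Definitions~\ref{pw} and~\ref{index}; (ii) the remark that $\mathcal{Q}^{\phi_h}$ is a policy-evaluation quantity for $\sigma_{\lambda_h}^{\phi_h}$; (iii) the MDP optimality comparison finishing the argument. The main obstacle I anticipate is step (i): I must be careful that indexability as stated only directly controls the action on resource $h$ (``$\sigma^*_{n,[\vec{\lambda}_{-h},y]}(s)=h$ for $y\le w_{n,h}$''), and deducing that the optimal policy agrees with the tie-breaking rule on the \emph{other} resources requires invoking the analogous index characterization for each $h'\neq h$ simultaneously and checking consistency — in particular that the region where $w^{\phi_h}_h(s)<\lambda_h$ is exactly where the optimal policy would not pick $h$, so that the residual decision among $\{0,1,\dots,H\}\setminus\{h\}$ is correctly resolved by the given indexes. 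Care is also needed at the boundary $w^{\phi_h}_h(s)=\lambda_h$, where both the ``serve on $h$'' branch and the ``defer'' branch may be invoked; I would note that the supremum in Definition~\ref{pw} makes these agree in value. A secondary, more minor point is justifying that the Bellman equation~\eqref{eq:qFunction} has a unique solution (contraction, since $\beta<1$) so that ``$\mathcal{Q}^{\phi_h}$'' is well-defined as the policy's value.
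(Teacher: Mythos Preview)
Your proposal is correct and follows essentially the same approach as the paper: show that setting $w^{\phi_h}_h$ to the true partial index makes the induced policy $\sigma_{\lambda_h}^{\phi_h}$ coincide with the optimal policy $\sigma^*_{[\vec{\lambda}_{-h},\lambda_h]}$ (via Definition~\ref{index}), and then conclude that $\mathcal{Q}^{\phi_h}$ attains its maximum because it is the policy-evaluation value of an optimal policy. The paper's proof is a terse two-line version of exactly this argument; your additional care about tie-breaking among $h'\neq h$, boundary behavior at $w^{\phi_h}_h(s)=\lambda_h$, and well-definedness of $\mathcal{Q}^{\phi_h}$ via contraction are refinements the paper glosses over, but the logical skeleton is identical.
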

 \begin{proof}
 If $w^{\phi_h}_{h}(s)\equiv w_h(s,\vec{\lambda}_{-h})$, then $\sigma_{\lambda_h}^{\phi_h}(s)=g$ if and only if $\lambda_{g}\leq w_{g}(s,\vec{\lambda}_{-g})$ for any $g$. By Definition \ref{index}, $\sigma_{\lambda_h}^{\phi_h}(s)=\sigma^{*}_{[\vec{\lambda}_{-h},\lambda_h]}(s)$ for all $\lambda_h$ and $s$. Since $\sigma^{*}_{[\vec{\lambda}_{-h},\lambda_h]}(s)$ is the optimal solution to the \textbf{Arm$_n([\vec{\lambda}_{-h},\lambda_h])$} problem, setting $w^{\phi_h}_{h}(s)\equiv w_h(s,\vec{\lambda}_{-h})$ maximizes $\mathcal{Q}^{\phi_h}(s,a,\lambda_h)$.
\end{proof}

\begin{figure}
\centering
        \includegraphics[width=0.8\linewidth]{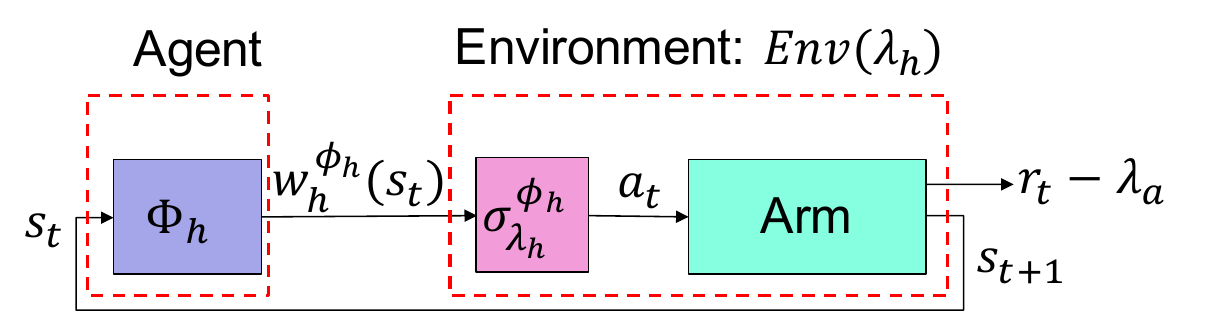}
    \caption{An illustration of Corollary~\ref{cor:1}}
    \label{fig:rl_formulation}
\end{figure}

To understand the implication of Corollary~\ref{cor:1}, consider the reinforcement learning problem as illustrated in Fig.~\ref{fig:rl_formulation} that contains an agent and an environment called $Env(\lambda_h)$. In each time step, the agent observes the state $s_t$ and chooses a real number as its control decision. When $Env(\lambda_h)$ receives the control decision, it first treats the control decision as $w^{\phi_h}_h(s_t)$ and employs $\sigma^{\phi_h}_{\lambda_h}$ to determine $a_t$. It then feeds $a_t$ to the restless arm to generate the next state $s_{t+1}$ and the net reward $r_t-\lambda_{a_t}$. The agent's goal is to find a control policy that can maximize the long-term average net reward, $\sum_{t=0}^\infty \beta^tE[r_t-\lambda_{a_t}]$. Corollary~\ref{cor:1} states that the {partial index} $w_h(s, \vec{\lambda}_{-h})$ is the optimal control policy for $Env(\lambda_h)$ for all $\lambda_h$. 

Based on Corollary \ref{cor:1}, we define the objective function for learning $\phi_{h}$ as 
\begin{align} \label{eq:jFunction}
J^{{\phi_{h}}} = \sum_{s \in S} \int_{\lambda_h=-M}^{\lambda_h=+M} \mathcal{Q}^{\phi_h}(s,\sigma_{\lambda_h}^{\phi_h}(s),\lambda_h)  d\lambda_h,
\end{align}
where $M$ is a sufficiently large constant such that ${\lambda_h} \in [-M,+M]$. If $J^{{\phi_{h}}}$ has a unique maximizer, then maximizing $J^{{\phi_{h}}}$ is equivalent to finding $\phi_h$ such that $w_h^{\phi_h}(\cdot)$ is the {partial index}. We therefore seek to find the {partial index} by finding $\phi_h$ that maximizes $J^{{\phi_{h}}}$. {We will find $\nabla_{\phi_{h}}J^{{\phi_{h}}}$ as shown in Theorem~\ref{thm:1} below}.


\begin{theorem}\label{thm:1} 
Given the parameter vector ${\phi}_h$, if all states $s \in S$ have distinct values of $w^{\phi_h}_{h}(s)$, then the gradient of the objective function $J^{{\phi_{h}}}$ with respect to the parameter vector ${\phi}_{h}$ is given by:
   \begin{align}\label{eq:jGradient} 
    \nabla _{\phi_{h}}J^{{\phi_{h}}}  = &  \sum_{s \in S}\left[\mathcal{Q}^{\phi_h}(s,h, w^{\phi_h}_{h}(s))\right.\left.-\mathcal{Q}^{\phi_h}(s,\sigma'_{\lambda_h}(s), w^{\phi_h}_{h}(s))\right]\nonumber \\
  &\nabla _{\phi_{h}}  w^{\phi_h}_{h}(s),  
  \end{align}
  where
\begin{align}
  &\sigma'_{\lambda_h}(s)\nonumber\\ 
  =&
    \begin{cases}
      0, &\text{if $w_{h^{'}}(s,\vec{\lambda}_{-h^{'}})< \lambda_{h^{'}}, \forall h^{'} \neq h$,}\\
      \max \{h'| w_{h^{'}}(s,\vec{\lambda}_{-h^{'}})\geq \lambda_{h^{'}}\},& \text{otherwise.}
    \end{cases}       
\end{align}
 \end{theorem}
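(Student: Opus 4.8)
The plan is to differentiate $J^{\phi_h}$ term by term and understand how a perturbation of $\phi_h$ affects each $\mathcal{Q}^{\phi_h}(s,\sigma_{\lambda_h}^{\phi_h}(s),\lambda_h)$ inside the integral over $\lambda_h \in [-M,+M]$. The key observation is that the objective $J^{\phi_h}$ has two sources of $\phi_h$-dependence: (i) the explicit dependence of the value function $\mathcal{Q}^{\phi_h}$ on the policy $\sigma_{\lambda_h}^{\phi_h}$ through the Bellman recursion~\eqref{eq:qFunction}, and (ii) the dependence of the chosen action $a = \sigma_{\lambda_h}^{\phi_h}(s)$ at the outer state $s$. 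I would handle these separately, and the main point is that the first source contributes nothing.

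First I would argue that the ``inner'' dependence vanishes. Fix $\lambda_h$ and a state $s$. The policy $\sigma_{\lambda_h}^{\phi_h}$ acts at state $s'$ by comparing $w^{\phi_h}_h(s')$ with $\lambda_h$; since by hypothesis all states have distinct values $w^{\phi_h}_h(\cdot)$, for all but finitely many $\lambda_h$ we have $w^{\phi_h}_h(s') \neq \lambda_h$ strictly for every $s'$, so a sufficiently small perturbation of $\phi_h$ does not change $\sigma_{\lambda_h}^{\phi_h}(s')$ at any $s'$. Hence the map $\phi_h \mapsto \mathcal{Q}^{\phi_h}(s,a,\lambda_h)$, for fixed action $a$ at the root and fixed policy thereafter, is locally constant in $\phi_h$ except on a measure-zero set of $\lambda_h$ — so its contribution to $\nabla_{\phi_h} J^{\phi_h}$ is zero. (This is the analogue of the standard fact that the policy-gradient ``baseline/bootstrapping'' terms telescope away; here it is even cleaner because the policy is piecewise constant in $\phi_h$.) What remains is only the dependence through the action selected at the outer summation variable $s$ itself.

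Next I would compute that remaining ``outer'' term. At state $s$, the policy $\sigma_{\lambda_h}^{\phi_h}(s)$ equals $h$ when $\lambda_h \le w^{\phi_h}_h(s)$ and equals $\sigma'_{\lambda_h}(s)$ (the best available non-$h$ resource, as defined in the statement) when $\lambda_h > w^{\phi_h}_h(s)$. Therefore
\begin{align}
J^{\phi_h} = \sum_{s\in S}\Big[\int_{-M}^{w^{\phi_h}_h(s)} \mathcal{Q}^{\phi_h}(s,h,\lambda_h)\,d\lambda_h + \int_{w^{\phi_h}_h(s)}^{+M} \mathcal{Q}^{\phi_h}(s,\sigma'_{\lambda_h}(s),\lambda_h)\,d\lambda_h\Big].
\end{align}
Now differentiate in $\phi_h$: the integrands' explicit $\phi_h$-dependence is killed by the argument of the previous paragraph, so only the dependence of the integration limit $w^{\phi_h}_h(s)$ survives, and Leibniz's rule gives, for each $s$,
\begin{align}
\nabla_{\phi_h}\Big[\,\cdot\,\Big] = \Big[\mathcal{Q}^{\phi_h}(s,h,w^{\phi_h}_h(s)) - \mathcal{Q}^{\phi_h}(s,\sigma'_{w^{\phi_h}_h(s)}(s),w^{\phi_h}_h(s))\Big]\,\nabla_{\phi_h} w^{\phi_h}_h(s),
\end{align}
and summing over $s$ yields~\eqref{eq:jGradient} (matching the theorem's notation $\sigma'_{\lambda_h}(s)$ evaluated at $\lambda_h = w^{\phi_h}_h(s)$).

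\textbf{Main obstacle.} The delicate step is making the ``inner dependence vanishes'' claim rigorous: I need to justify interchanging $\nabla_{\phi_h}$ with both the integral $\int d\lambda_h$ and the (infinite-horizon) Bellman fixed point defining $\mathcal{Q}^{\phi_h}$, and to confirm that the set of ``bad'' $\lambda_h$ — those for which some $w^{\phi_h}_h(s') = \lambda_h$, so the policy is not locally constant — is finite (it has at most $|S|$ points under the distinctness hypothesis) and hence Lebesgue-null, so it does not contribute. I would use the discount factor $\beta<1$ and boundedness of rewards to get uniform convergence/Lipschitz control of $\mathcal{Q}^{\phi_h}$ in $\phi_h$, then apply dominated convergence to pass the gradient inside the integral; the finitely-many boundary $\lambda_h$ where $w^{\phi_h}_h(s)=\lambda_h$ for some $s$ are exactly the Leibniz-rule endpoints already accounted for in the outer term and contribute nothing extra to the inner term. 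Everything else is routine.
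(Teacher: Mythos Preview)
Your approach is essentially the same as the paper's—both rest on the Leibniz rule and on the observation that, for $\lambda_h$ away from the finitely many threshold values, a small perturbation of $\phi_h$ leaves the policy (and hence $\mathcal{Q}^{\phi_h}$) unchanged. The organizational difference is that the paper partitions $[-M,M]$ at \emph{all} $|S|$ threshold values $\mathbb{M}^k=w_h^{\phi_h}(s_k)$ simultaneously, so that on each subinterval the full policy is rewritten as a manifestly $\phi_h$-independent object $\hat{\sigma}^{k+1}$ with Q-function $\hat{\mathcal{Q}}_{k+1}$; Leibniz then produces boundary terms at every $\mathbb{M}^k$, which telescope across $k$ and across the outer sum over $s$ to the stated expression.

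Your per-$s$ split at the single threshold $w_h^{\phi_h}(s)$, followed by an ``a.e.\ zero derivative $+$ dominated convergence'' argument for the residual inner dependence, is the one place to be careful. Pointwise a.e.\ vanishing of $\nabla_{\phi_h}\mathcal{Q}^{\phi_h}(s,a,\lambda_h)$ does \emph{not} by itself kill $\nabla_{\phi_h}\int \mathcal{Q}^{\phi_h}\,d\lambda_h$: the difference quotient $\epsilon^{-1}[\mathcal{Q}^{\phi_h+\epsilon v}-\mathcal{Q}^{\phi_h}]$ is $O(1)$ on $\lambda_h$-intervals of width $O(\epsilon)$ around each of the \emph{other} thresholds $w_h^{\phi_h}(s')$, so no integrable dominating function exists (compare the toy integrand $\mathbb{I}(\lambda<\phi)$, whose $\phi$-derivative vanishes a.e.\ yet $\nabla_\phi\int\mathbb{I}(\lambda<\phi)\,d\lambda=1$). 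The paper's global partition is exactly the device that turns those moving inner thresholds into explicit Leibniz boundary terms at $\mathbb{M}^k$ for $s_k\neq s$, which then cancel in the telescoping; if you refine your partition to all $|S|$ thresholds rather than just the one for the outer state, your argument becomes the paper's.
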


\begin{proof}
Taking the gradient of Eq. (\ref{eq:jFunction}) yields
\begin{align} \label{eq:gradientJa}
\nabla _{\phi_h}J^{{\phi_h}} &= \nabla _{\phi_h}\sum_{s \in S} \int_{\lambda_h=-M}^{\lambda_h=+M} \mathcal{Q}^{\phi_h}(s,\sigma_{\lambda_h}^{\phi_h}(s),\lambda_h) d\lambda_h.
\end{align}
   Renumber all states in $S$ such that $w_{h}^{\phi_h}( s_{|S|}) $ $\textgreater$  $w_{ h}^{\phi_h}(s_{|S|-1}) $  $\textgreater$  . . .  $w_{h}^{\phi_h}( s_{1})$. Let $\mathbb{M}^0$ = $-M$, $\mathbb{M}^k$ = $w_{ h}^{\phi_h}(s_{k})$, for each $1\leq k\leq |S|$, and $\mathbb{M}^{|S|+1}$ = $+M$. Also let $S_k$ be the subset of states $\{ s_k, s_{k+1}, . . . , s_{|S|}\}$, for each $1\leq k\leq |S|$. Then, for any $k$, $S_k$ is the subset of states with $w_h^{\phi_h}(s)\geq \mathbb{M}^k$. Hence, for any $\lambda_h$ in the interval $(\mathbb{M}^k, \mathbb{M}^{k+1})$, we have
\begin{align}
  {\sigma}_{\lambda_h}^{\phi_h}(s)=
    \begin{cases}
      h, &\text{if $s \in S_{k+1}$,}\\
      \sigma'_{\lambda_h}(s),& \text{otherwise.}
    \end{cases}       
\end{align}

Define the policy $\hat{\sigma}^{k+1}_{\lambda_h}(s)$ as the policy that chooses action $h$ when $\mathbb{I}{{(s \in S_{k+1})}}=1$  and $\sigma'_{\lambda_h}(s)$ otherwise. Let $\hat{\mathcal{Q}}_{k+1}(s,a,\lambda_h)$ be the state-action function of applying $\hat{\sigma}^{k+1}_{\lambda_h}(s)$. Then, for any ${\lambda}_h \in (\mathbb{M}^k, \mathbb{M}^{k+1})$, ${\sigma}_{\lambda_h}^{\phi_h}(s)=\hat{\sigma}^{k+1}_{\lambda_h}(s)$ for all $s$, and therefore $\mathcal{Q}^{\phi_h}(s,a,\lambda_h)=\hat{\mathcal{Q}}_{k+1}(s,a,\lambda_h)$ for all $s$ and $a$. We can now rewrite Eq. (\ref{eq:gradientJa}) as
   \begin{align} \label{eq:gradientJc}
    \nabla _{\phi_h}J^{{\phi_h}}=   \sum_{k=0}^{|S|}\sum_{s \in S} \nabla _{\phi_h} \int_{\lambda_h=\mathbb{M}^{k}}^{\lambda_h=\mathbb{M}^{k+1}}\hat{\mathcal{Q}}_{k+1}(s,\hat{\sigma}^{k+1}_{\lambda_h}(s),\lambda_h) d\lambda_h.\end{align}


    Applying Leibniz integral rule and we have:
    \begin{align} \label{eq:gradientJe}
    \nabla _{\phi_h}J^{{\phi_h}}=  & \sum_{k=0}^{|S|} \sum_{s \in S}\left[\hat{\mathcal{Q}}_{k+1}(s,\hat{\sigma}^{k+1}_{\lambda_h}(s),\mathbb{M}^{k+1}) \right. \nabla _{\phi_h}  \mathbb{M}^{k+1}\nonumber \\
  &\left.-\hat{\mathcal{Q}}_{k+1}(s,\hat{\sigma}^{k+1}_{\lambda_h}(s),\mathbb{M}^{k})\nabla _{\phi_h}  \mathbb{M}^{k} \right] \nonumber \\
  &+ \sum_{k=0}^{|S|} \sum_{s \in S} \int_{\lambda_h=\mathbb{M}^{k}}^{\lambda_h=\mathbb{M}^{k+1}}\nabla _{\phi_h} \hat{\mathcal{Q}}_{k+1}(s,\hat{\sigma}^{k+1}_{\lambda_h}(s),\lambda_h)d\lambda_h. \end{align}

Since $\hat{\mathcal{Q}}_{k+1}(s,\hat{\sigma}^{k+1}_{\lambda_h}(s),\lambda_h)$ is constant with respect to $\phi_h$, $\nabla _{\phi_h} \hat{\mathcal{Q}}_{k+1}(s,\hat{\sigma}^{k+1}_{\lambda_h}(s),\lambda_h)=0$. Moreover, for any $k$, $\hat{\sigma}^{k+1}_{\lambda_h}(s)=\hat{\sigma}^{k}_{\lambda_h}(s)$ for all $s$ except $s_k$. Hence, we can further simplify Eq.~(\ref{eq:gradientJe}) and obtain

 \begin{align} 
 &\nabla _{\phi_h}J^{{\phi_h}}\nonumber\\
  =&\sum_{k=0}^{|S|} \sum_{s \in S}\left[\mathcal{Q}^{\phi_h}(s,\hat{\sigma}^{k+1}_{\lambda_h}(s),\mathbb{M}^{k+1})\nabla _{\phi_h}  \mathbb{M}^{k+1} \right.\nonumber \\
  &\left.- \mathcal{Q}^{\phi_h}(s,\hat{\sigma}^{k+1}_{\lambda_h}(s),\mathbb{M}^{k})\nabla _{\phi_h}  \mathbb{M}^{k} \right] \nonumber \\
  =&\sum_{k=1}^{|S| }\left[\mathcal{Q}^{\phi_h}(s_k,\hat{\sigma}^{k}_{\lambda_h}(s_k),\mathbb{M}^{k})-\mathcal{Q}^{\phi_h}(s_k,\hat{\sigma}^{k+1}_{\lambda_h}(s_k),\mathbb{M}^{k})\right]\nabla _{\phi_h}  \mathbb{M}^{k}  \nonumber  \\
  =& \sum_{k=1}^{|S| }\left[\mathcal{Q}^{\phi_h}(s_k,h,w_{h}^{\phi_h}(s_k))-\mathcal{Q}^{\phi_h}(s_k,\sigma'_{\lambda_h}(s_k),w_{h}^{\phi_h}(s_k))\right]\nabla _{\phi_h} w_{h}^{\phi_h}(s_k)\nonumber \\
  =& \sum_{s \in S}\left[\mathcal{Q}^{\phi_h}(s,h,w_{h}^{\phi_h}(s))\right. \left.-\mathcal{Q}^{\phi_h}(s,\sigma'_{\lambda_h}(s),w_{h}^{\phi_h}(s))\right]\nabla _{\phi_h} w_{h}^{\phi_h}(s)  \nonumber 
  \end{align}


This completes the proof.
\end{proof}

\section{Deep Index Policy}\label{sec:DIP}

In this section, we introduce DIP, a new Deep Index Policy, for online learning of the {partial index}es by leveraging the policy gradient theorem for index learning (Thm. \ref{thm:1}). For each restless arm $n$, DIP maintains $H$ actor networks, parameterized by $\phi_{n,1}, \phi_{n,2},\dots$, and a critic network, parameterized by $\theta_{n}$. Each actor network $\phi_{n,h}$ takes $(s_n, \vec{\lambda}_{-h})$ as input and produces a number $w^{\phi_{n,h}}_{n,h}(s_n,\vec{\lambda}_{-h})$ as its output. DIP aims to train $\phi_{n,h}$ such that the actor network predicts the {partial index}, that is, $w^{\phi_{n,h}}_{n,h}(s_n,\vec{\lambda}_{-h})\approx w_{n,h}(s_n,\vec{\lambda}_{-h})$, for all $n$ and $h$. Each critic network $\theta_n$ takes $(s_n, a_n, \vec{\lambda})$ as input and produces a number $\mathcal{Q}_n^{\theta_n}(s_n, a_n, \vec{\lambda})$. DIP aims to train $\theta_n$ such that the critic network predicts the state-action function of applying the optimal policy to the \textbf{Arm}$_n(\vec{\lambda})$ problem, which we denote by $\mathcal{Q}^*_n(s_n, a_n, \vec{\lambda})$. The Bellman equation of $Q^*_n(s_n, a_n, \vec{\lambda})$ is
\begin{align} \label{eq:Q^*forArm_n}
\mathcal{Q}^*_n(s_n,a_n,\vec{\lambda}) = &R_n(s_n,a_n)- \lambda_{a_n} \nonumber \\
+&\beta \sum_{s'_n\in S} P(s'_n|s_n,a_n) \max_{a'}\mathcal{Q}^{*}(s'_n,a',\lambda_h).
\end{align}
DIP also maintains a target critic network with parameters $\theta_{n}'$ for each $n$. The target critic networks are updated at a slower rate compared to critic parameters $\theta_{n}$ to ensure stability and improve the training process by providing consistent target values for the critic network \cite{lillicrap2015continuous}. 

We first provide an overview of the procedure of DIP. In each time step, DIP employs a exploration-exploitation policy similar to the $\epsilon-$greedy policy. With probability $\epsilon$, DIP randomly assigns restless arms to resources for the purpose of exploration. With probability $1-\epsilon$, DIP employs Max-Weight Index Matching where the weight between restless arm $n$ and resource $h$ is set to be $w^{\phi_{n,h}}_{n,h}(s_{n,t},\vec{\lambda}_{-h})$. DIP observes the actions, rewards, and state transitions of all restless arms and store them in a replay buffer. DIP then updates the value of $\vec{\lambda}$ by $\lambda_h\leftarrow \Big[\lambda_h+\rho_t\Big(\sum_n\mathbb{I}(w^{\phi_{n,h}}_{n,h}(s_{n,t}\vec{\lambda}_{-h})>\lambda_h)-C_h\Big)\Big]^+,$ for all $h$. Finally, DIP updates all actor networks, critic networks, and target critic networks. {DIP is based on Deep Deterministic Policy Gradient (DDPG), an off-policy reinforcement learning algorithm, to learn the {partial index}es. This off-policy nature is crucial because it allows the learning of the partial index that optimizes the fictitious policy described in Thm. \ref{thm:1}, while the max-weight matching policy is being executed.} Alg.~\ref{alg:dip} describes the detailed procedure of DIP. 

      \begin{algorithm}[tb]
        \caption{Deep Index Learning} \label{alg:dip}
        \begin{algorithmic}
          \STATE Initialize $\vec{\lambda}$, $\phi_{n,h}$, and $\theta_n$
          \STATE Initialize a replay memory $\mathcal{M}_n$ for each $n$
          \STATE $\theta'_n\leftarrow\theta_n$
          \FOR{t=0, 1, 2, \dots}
          \STATE $x\sim U(0,1)$
          \IF{$x<\epsilon$}
          \STATE Randomly match arms to resources
          \ELSE
          \STATE Create a bipartite graph with $N$ arm nodes and $H+1$ resource nodes
          \STATE Add an edge with weight $w^{\phi_{n,h}}_{n,h}(s_{n,t},\vec{\lambda}_{-h})$ between arm node $n$ and resource node $h$
          \STATE Find the max-weight matching and match arms to resources accordingly
          \STATE $\lambda_h\leftarrow \Big[\lambda_h+\rho_t\Big(\sum_n\mathbb{I}(w^{\phi_{n,h}}_{n,h}(s_{n,t}\vec{\lambda}_{-h})>\lambda_h)-C_h\Big)\Big]^+,\forall h$
          \ENDIF
          \STATE Save $(s_{n,t}, a_{n,t}, r_{n,t}, s_{n,t+1})$ to $\mathcal{M}_n$
          \STATE Run Alg.~\ref{alg:MR-RMAB} to update all neural networks
          \ENDFOR
        \end{algorithmic}
      \end{algorithm}

We now discuss how DIP updates all neural networks. For each $n$, DIP randomly samples a batch of transitions $(s_{n,t},a_{n,t},r_{n,t},s_{n,t+1})$ from the replay buffer and attach a randomly selected $\vec{\lambda}\in[-M,+M]^H$ to each transition. For each transition, DIP calculates 
\begin{align}\label{eq:actor_gradient}
&[\mathcal{Q}_n^{\theta_n}(s_{n,t}, a_{n,t}, [\vec{\lambda}_{-a_{n,t}}, w^{\phi_{n,a_{n,t}}}_{n,a_{n,t}}(s_{n,t},\vec{\lambda}_{-a_{n,t}})])\nonumber\\
-&\mathcal{Q}_n^{\theta_n}(s_{n,t}, \sigma'_{\lambda_{a_{n,t}}}(s), [\vec{\lambda}_{-a_{n,t}}, w^{\phi_{n,a_{n,t}}}_{n,a_{n,t}}(s_{n,t},\vec{\lambda}_{-a_{n,t}})])]\nonumber\\
\times&\nabla _{\phi_{n,a_{n,t}}}  w^{\phi_{n,a_{n,t}}}_{n,a_{n,t}}(s_{n,t},\vec{\lambda}_{-a_{n,t}})
\end{align}
as an approximation of Eq. (\ref{eq:jGradient}) and uses it to update the actor networks $\phi_{n, a_{n,t}}$.

DIP then uses the same batch of transitions and $\vec{\lambda}$ to update the critic network. Based on the Bellman equation of $Q^*_n(s_n,a_n, \vec{\lambda})$ (Eq. (\ref{eq:Q^*forArm_n})), DIP defines the loss function of the critic function as
\begin{align}\label{eq:lossFunction}
\mathcal{L}_n^{{\theta}_n} =& \mathbb{E} \left[ \left(\mathcal{Q}_n^{{\theta}_n}(s_{n,t},{a_{n,t}},{\vec{\lambda}})-r_{n,t} + {\lambda}_{a_{n,t}}\right.\right. \nonumber \\ & \left. \left. -\beta \max _{a'}
\mathcal{Q}_n^{{\theta}'_n}(s_{n,t+1},a',{\vec{\lambda}}) \right)^2\right].
\end{align}

DIP then uses the each transition and $\vec{\lambda}$ from the batch to estimate the gradient of the loss function by
\begin{align}\label{eq:gradlossFunction}
 &2 \left(\mathcal{Q}_n^{{\theta}_n}(s_{n,t},{a_{n,t}},\vec{\lambda})-r_{n,t} + {\lambda}_{a_{n,t}} \right. \nonumber \\ 
 &  \left.-\beta \max _{a'}
\mathcal{Q}_n^{{\theta}^{'}_n}(s_{n,t+1},a',\vec{\lambda}) \right) \nabla _{\theta_n}\mathcal{Q}_n^{{\theta}_n}(s_{n,t},{a_{n,t}},\vec{\lambda}).
\end{align}
and to update $\theta_n$. Finally, DIP soft updates the target critic network by $\theta'_{n}\leftarrow\tau \theta_{n}+(1-\tau)\theta'_{n}$. {The training complexity scales with the product of the number of arms and the number of resources i.e., \(O(N \times H)\), as each (arm, resource) pair is trained independently.}

    \begin{algorithm}[tb]
        \caption{Neural Networks Updates} \label{alg:MR-RMAB}
        \begin{algorithmic}
          \FOR{n= 1, 2, \dots, N}
          \STATE Sample a mini batch $B$ transitions $(s_{n,{t_k}},a_{n,{t_k}},r_{n,{t_k}},s_{n,t_{k+1}})$, for $1 \leq k \leq B$ from memory $\mathcal{M}_n$.
          \STATE Randomly sample $B$ different shadow price  [$\vec{\lambda}_1$, $\vec{\lambda}_2$, $\ldots$ $\vec{\lambda}_B$ ] from $[-M,M]^H$.
          \STATE $\Delta \phi_{n,h}\leftarrow 0,$ for all $h$; $\Delta\theta_n\leftarrow0$
          \FOR{k=1, 2, \dots, B}
          \STATE Set $\delta$ to be Eq.~(\ref{eq:actor_gradient}) for the $k^{th}$ element in the batch
          \STATE $\Delta \phi_{n,a_{n,t_k}}\leftarrow\Delta \phi_{n,a_{n,t_k}}+\delta$
          \STATE Set $\delta$ to be Eq.~(\ref{eq:gradlossFunction}) for the $k^{th}$ element in the batch
          \STATE $\Delta\theta_n\leftarrow \Delta\theta_n+\delta$
          \ENDFOR
          \STATE Update $\phi_{n,h}$ by $\Delta \phi_{n,h}$ and $\theta_n$ by $\Delta\theta_n$
          \STATE $\theta'_{n}\leftarrow\tau \theta_{n}+(1-\tau)\theta'_{n}$
          \ENDFOR
        \end{algorithmic}
      \end{algorithm}
   
\section{Simulations} \label{sec:simulation}

In this section, we present our simulation results that evaluate DIP in three different MR-RMB problems. The first two problems are scheduling problems in multi-channel wireless networks with one on minimizing AoI and the other on minimizing holding cost. The third problem considers advertisements placements in social media websites.

We compare the performance of DIP against domain-specific policies in each problem. We also evaluate DeepTOP \cite{nakhleh2022deeptop} in all problems. DeepTOP is a deep online learning algorithm that finds the Whittle index when there is only one kind of resource. In order to incorporate DeepTOP in multi-resource problems, we consider a policy that selects the restless arms with the highest indexes to activate, and then randomly matches selected restless arms to resources. We then train DeepTOP with respect to this policy.

All simulation results are the average of 20 independent runs, with error bars indicating standard deviations. Each run consists of 12,000 time steps. For each time step, we obtain the running average performance of the past 100 time steps. The value of $\vec{\lambda}$ is updated every 100 time steps and we have set the discounted factor to 0.99 (i,e., $\beta=0.99$) and the learning rate of $\vec{\lambda}$ to  0.01 (i.e., $\rho=0.01$) for both problems. The learning rates of neural networks are determined by the ADAM optimizer. {All neural networks of DIP have two fully connected hidden layers with 128 neurons in each hidden layer. We have use the same setting for DeepTOP.}

\subsection{Network Setting}

We introduce the settings of multi-channel wireless networks that will be used in both the AoI minimization problem and the holding cost minimization problem. We consider two types of systems: heterogeneous channels and homogeneous channels. Each type of system has two different settings. In all settings, we consider the challenge that wireless transmissions are not reliable. When a mobile user $n$ is scheduled to transmit on channel $h$, the transmission will be successful with probability $p_{n,h}$.

We have two settings for heterogeneous channels, one with two channels and the other with three channels. For the two-channel system, we assume that there are 20 mobile users. The first 14 users have $p_{n,1}=0.7$ and $p_{n,2}=0.3$. The other 6 users have $p_{n,1}=0.3$ and $p_{n,2}=0.7$. Each channel has a capacity of two, that is, $C_h=2$ for all $h$. For the three-channel system, we assume that there are 34 mobile users. The first 20 users have $p_{n,1}=0.9, p_{n,2}=0.5, p_{n,3}=0.1$, the next 4 users have $p_{n,1}=0.1, p_{n,2}=0.9, p_{n,3}=0.5$, and the last 10 users have $p_{n,1}=0.5, p_{n,2}=0.1, p_{n,2}=0.9$. Each channel has a capacity of two.

We also have two settings for homogeneous channels, one with two channels and the other with three channels. For the two-channel system, we assume that there are 20 mobile users. The first 14 users have $p_{n,1}=p_{n,2}=0.7$. The other 6 users have $p_{n,1}=p_{n,2}=0.3$. Each channel has a capacity of two. For the three-channel system, we assume that there are 34 mobile users. The first 20 users have $p_{n,1}=p_{n,2}=p_{n,3}=0.9$, the next 4 users have $p_{n,1}=p_{n,2}=p_{n,3}=0.7$, and the last 10 users have $p_{n,1}=p_{n,2}=p_{n,3}=0.5$. Each channel has a capacity of two.

We note that the homogeneous channel systems are equivalent to single-channel systems where the capacity of the channel is $HC_h$. Since DeepTOP learns the Whittle index in single-channel systems, we can use the performance of DeepTOP as that by the Whittle index policy in homogeneous channel systems.

\subsection{AoI Minimization}

AoI has gained significant research interests due to its elegance in capturing information freshness. We define the AoI of a mobile user recursively as follows: At time $t=0$ the AoI of the user is 1. In each subsequent time step, the AoI increases by 1 if there is no packet delivery for the user, either because the user is not scheduled or because the transmission fails, and the AoI becomes 1 if there is a packet delivery.

We can model the AoI of user $n$ as a MDP where the state of the user $s_{n,t}$ is its AoI. To ensure a finite state space, we cap the AoI at 20. If user $n$ is not scheduled to any channel, then its AoI will increase by one, and hence 
\begin{equation}\label{eq:prob1}
P_n(s_{n,t+1}=\min\{s+1, 20\}|s_{n,t}=s,a_{n,t}=0)=1.
\end{equation}
On the other hand, if user $n$ is scheduled to transmit on channel $h$, then user $n$ will successfully deliver a packet with probability $p_{n,h}$. Hence, we have
\begin{align}\label{eq:prob2}
P_n(s_{n,t+1}&=\min\{s+1, 20\}|s_{n,t}=s,a_{n,t}=h)=1-p_{n,h}, \\
P_n(s_{n,t+1}&=1|s_{n,t}=s,a_{n,t}=h)=p_{n,h}.
\end{align}
The reward of user $n$ is $R_{n}(s_{n,t},a_{n,t}) = - s_{n,t+1}$. The objective is to minimize the total long-term discounted AoI of all users in the system.

\begin{figure}
    \centering
    \subfigure[Two-channel system]{
        \centering
        \includegraphics[width=0.8\linewidth]{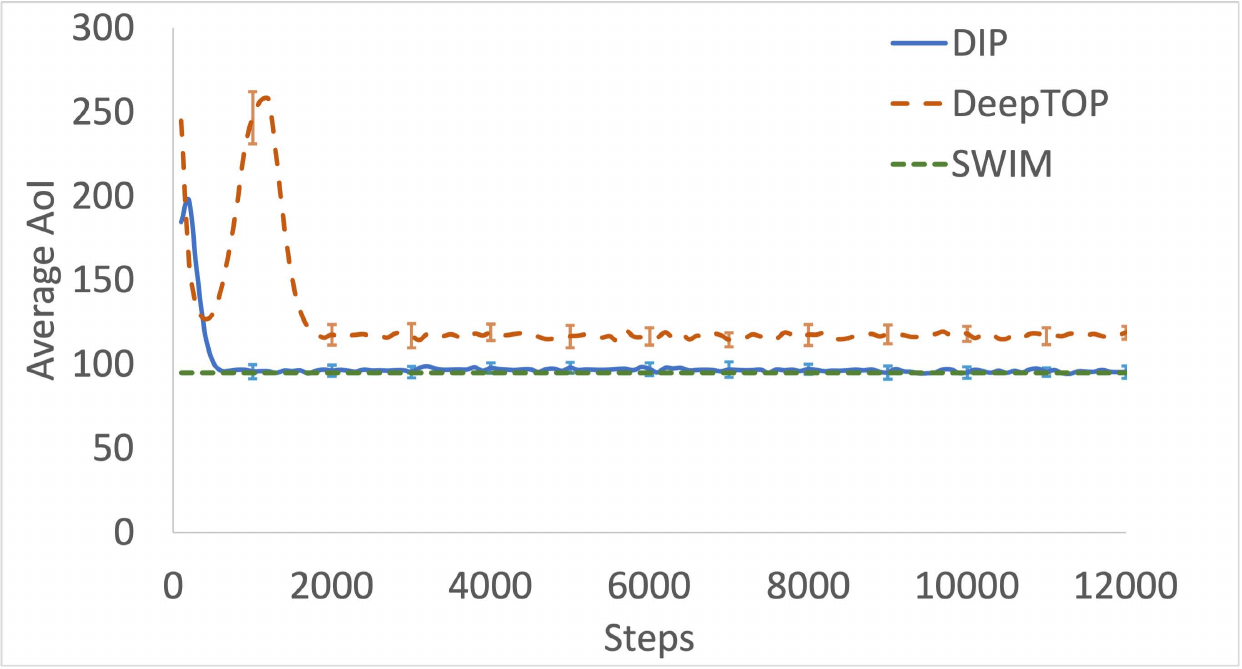}
   
        \label{fig:HetroA20AoI}
    }

    \subfigure[Three-channel system]{
        \centering
        \includegraphics[width=0.8\linewidth]{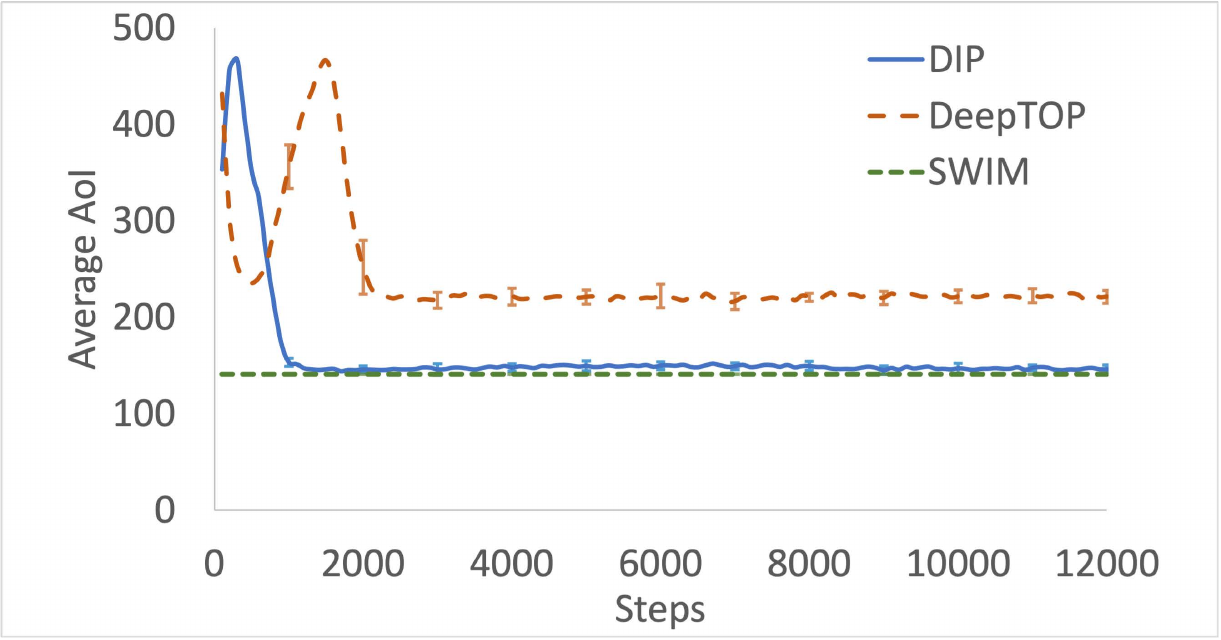}

        \label{fig:HetroA34AoI}
}
    \caption{AoI comparison for multi-channel wireless networks with heterogeneous channels}
    \label{fig:HetroAoI}
\end{figure}

A recent paper \cite{zou2021minimizing} has studied the AoI minimization problem in multi-channel wireless networks. It has proposed a Sum Weighted Index Matching (SWIM) policy. SWIM calculates the {partial index}es of all users on all channels and then use max-weight matching to schedule users. However, SWIM requires the precise knowledge of all $p_{n,h}$ to calculate the {partial index}es. Since our DIP aims to learn the {partial index}es without any prior knowledge of the system, we can use SWIM as a baseline policy.

Fig.~\ref{fig:HetroAoI} shows the simulation results for heterogeneous multi-channel wireless networks. It can be observed that the AoI of DIP converges to the AoI of SWIM in less than 1,000 time steps in both the two-channel system and the three-channel system. This shows that DIP has indeed efficiently learned the {partial index}es and the Lagrange multipliers. It can also be observed that DeepTOP is considerably worse than DIP. This shows the standard Whittle index, which is designed for systems with only one kind of resource, does not work well in multi-resource systems.

Simulation results for homogeneous multi-channel wireless networks are shown in Fig.~\ref{fig:HomogeneousAoI}. As discussed in the previous section, these systems are equivalent to single-channel systems. In such systems, the Whittle index policy is near-optimal. Indeed, the performance of DeepTOP converges to SWIM. DIP also converges to SWIM in less than 2,000 time steps.

\begin{figure}
    \centering
    \subfigure[Two-channel system]{
        \centering
        \includegraphics[width=0.8\linewidth]{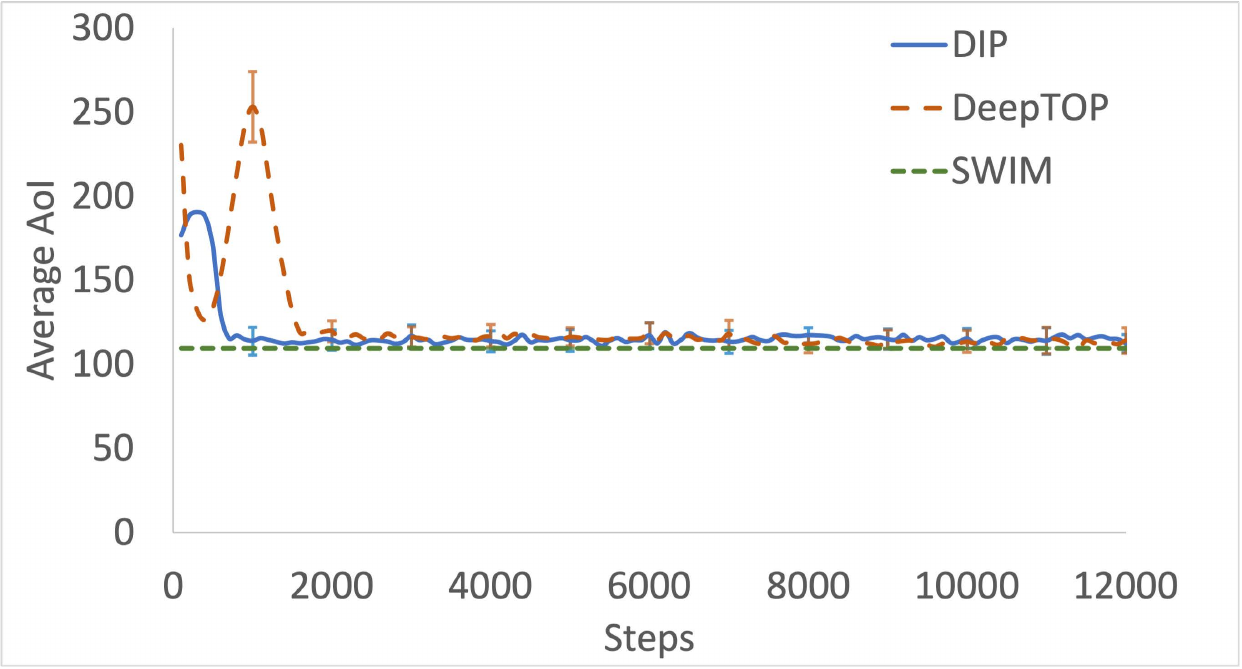}

        \label{fig:HomogeneousA20AoI}
}

    \subfigure[Three-channel system]{
        \centering
        \includegraphics[width=0.8\linewidth]{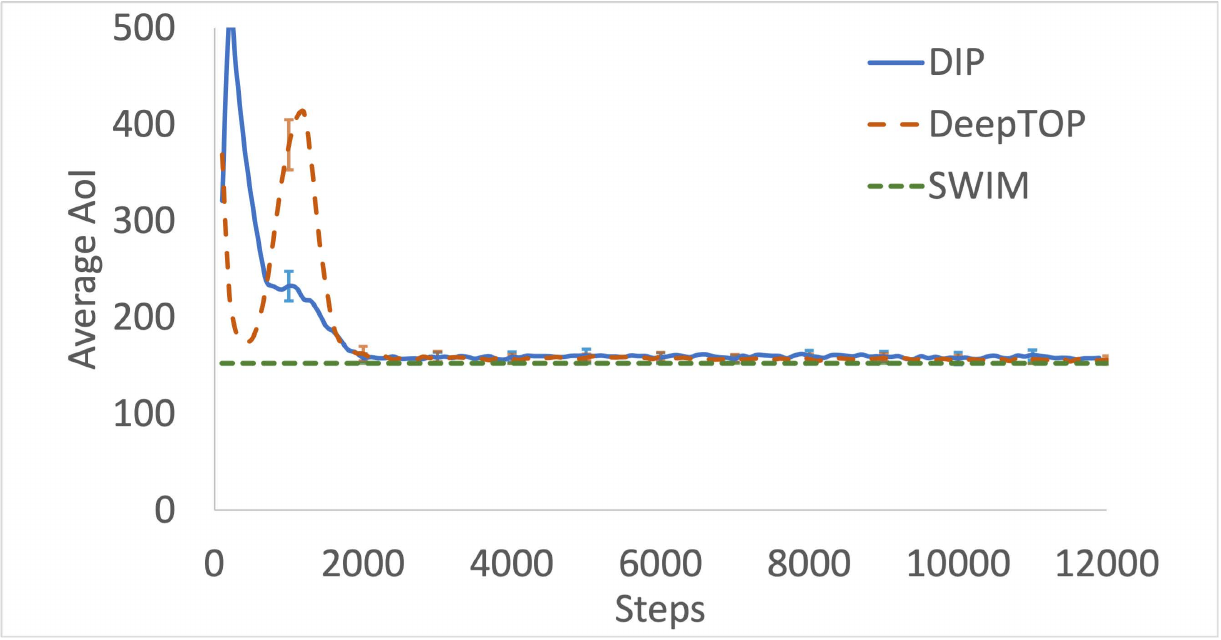}

        \label{fig:HomogeneousA34AoI}
    }
    \caption{AoI comparison for multi-channel wireless networks with homogeneous channels.}
    \label{fig:HomogeneousAoI}
\end{figure}

\subsection{Holding Cost Minimization}

We consider the problem of minimizing holding costs. In this problem, the base station maintains a queue of undelivered packets fro each mobile user. In each time slot, a packet arrives for user $n$ with probability $\zeta_n$. A packet is delivered for user $n$ whenever user $n$ has a successful transmission. In each time slot, each user $n$ incurs a holding cost of its queue size squared.

We can model this problem as a MDP where the state of user $n$, $s_{n,t}$, is its queue size. To ensure a finite state space, we cap the queue size at 20. If user $n$ is not scheduled to any channel, then its queue size will increase if there is a packet arrival, and will remain the same, otherwise. Hence, 
\begin{align}\label{eq:prob1h}
P_n(s_{n,t+1}&=\min\{s+1, 20\}|s_{n,t}=s,a_{n,t}=0)=\zeta_n, \\
P_n(s_{n,t+1}&=s|s_{n,t}=s,a_{n,t}=0)=1-\zeta_n.
\end{align}
If user $n$ is scheduled to channel $h$, then it will have a packet departure with probability $p_{n,h}$. It will have a packet arrival with probability $\zeta_n$. Hence, we have
\begin{align}
P_n(s_{n,t+1}&=\min\{s+1, 20\}|s_{n,t}=s,a_{n,t}=h)=(1-p_{n,h})\zeta_n, \\
P_n(s_{n,t+1}&=s|s_{n,t}=s,a_{n,t}=h)=(1-p_{n,h})(1-\zeta_n)+p_{n,h}\zeta_n,\\
P_n(s_{n,t+1}&=\max\{s-1,0\}|s_{n,t}=s,a_{n,t}=h)=p_{n,h}(1-\zeta_n).
\end{align}
The reward of user $n$ is $R_{n}(s_{n,t},a_{n,t}) = - s_{n,t}^2$.

Ansell et al. \cite{ansell2003whittle} has studied the scheduling problem for holding cost minimization for the special case of single-channel wireless networks. It has derived the Whittle index for this special case. To employ Ansell et al. \cite{ansell2003whittle} for our multi-channel wireless networks, we first assign each user $n$ to a channel $h^*_n$ that has the highest reliability, that is, $h^*_n:=\arg\max_h \{p_{n,h}\}$. We then calculate the Whittle index under this channel assignment, which Ansell et al. has shown to be
\begin{align}\label{eq:anselWhittleIndex}
\frac{3\zeta_n-p_{n,h^*_n}}{p_{n,h^*_n}-\zeta_n}+2p_{n,h^*_n}s_{n,t},
\end{align}
and schedules users according to their Whittle indexes. We call this policy the Whittle index policy.

Fig.~\ref{fig:Hetroholding} shows the simulation results for heterogeneous multi-channel wireless networks, where we set $\zeta_n=0.11$ for both the two-channel system and the three-channel system. It can be observed that DIP significantly outperforms the Whittle index policy and DeepTOP. While the Whittle index policy computes the Whittle index, it can only assign a user to its best channel. On the other hand, DIP allows a user to be matched to the worse channel when its best channel is too congested. The big performance gap between the Whittle index policy and DIP highlights the additional challenges faced in MR-RMB problems.

\begin{figure}
    \centering
    \subfigure[Two-channel system]{
        \centering
        \includegraphics[width=0.8\linewidth]{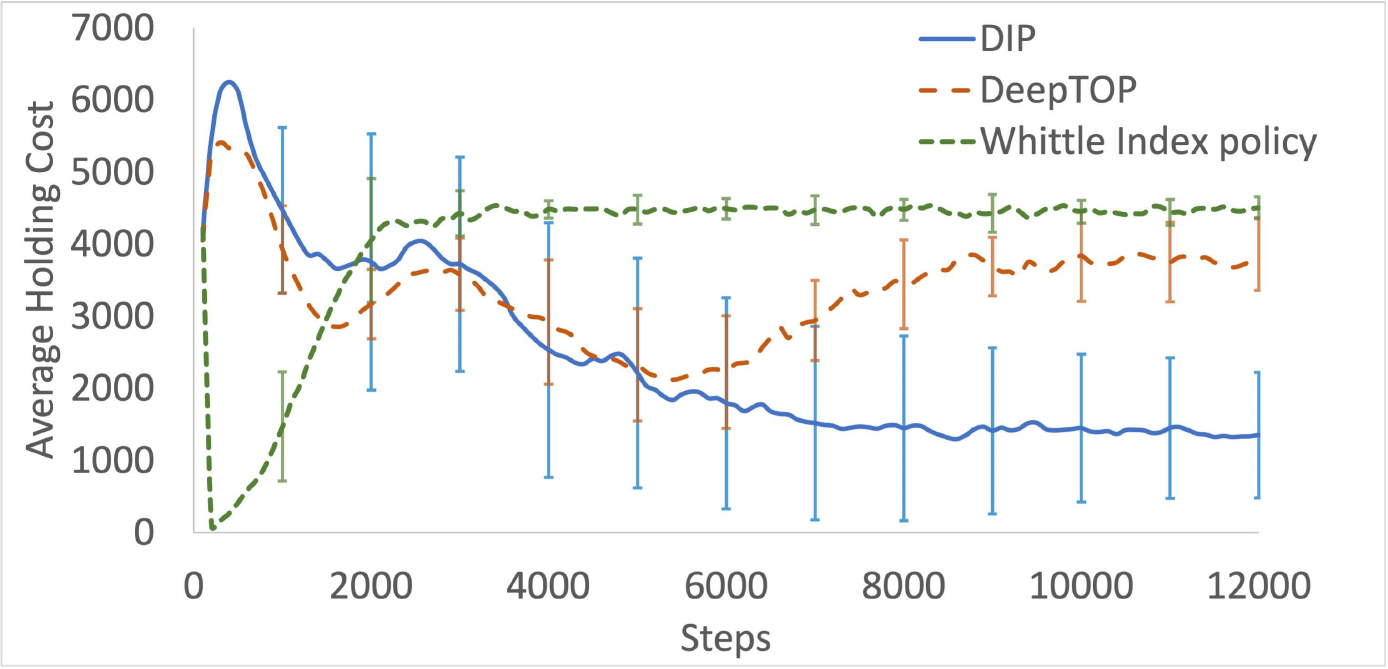}

        \label{fig:HetroA20holding}
   }

\subfigure[Three-channel system]{
        \centering
        \includegraphics[width=0.8\linewidth]{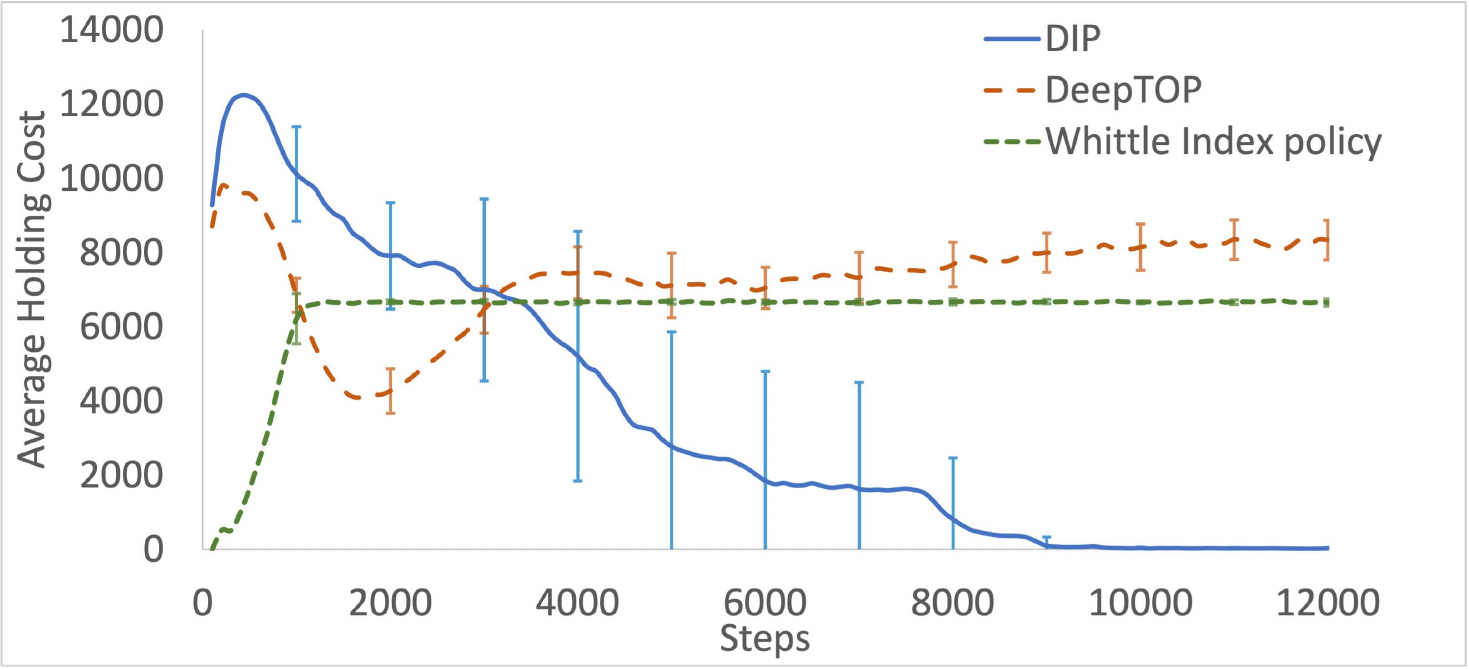}
        \label{fig:HetroA34holding}
}
    \caption{Holding cost comparison for multi-channel wireless networks with heterogeneous channels}
    \label{fig:Hetroholding}
\end{figure}

Simulation results for homogeneous multi-channel wireless networks are shown in Fig.~\ref{fig:Homogeneousholding}. We set $\zeta_n=0.1$ for the two-channel system and $\zeta_n=0.08$ for the three-channel system so that the arrival rates are close to the boundaries of the capacity regions. We observe that all three policies converge to the same holding cost. This is to be expected since these networks are equivalent to single-channel wireless networks. The fact that DIP converges to the Whittle index policy suggests that DIP indeed learns the Whittle index.

\begin{figure}
    \centering
    \subfigure[Two-channel system]{
        \centering
        \includegraphics[width=0.8\linewidth]{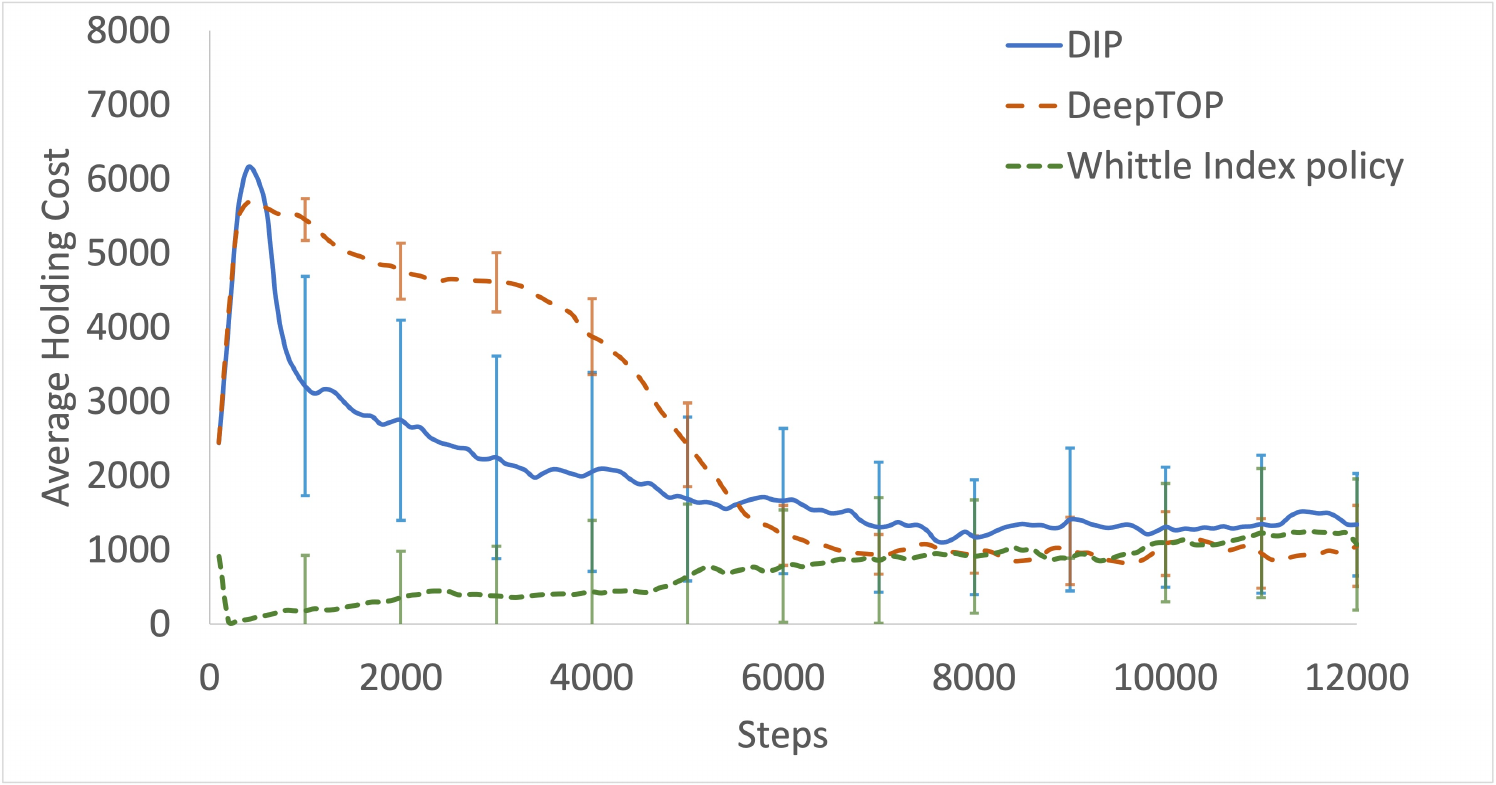}
        \label{fig:HomogeneousA20holding}

}
    \subfigure[Three-channel system]{
        \centering
        \includegraphics[width=0.8\linewidth]{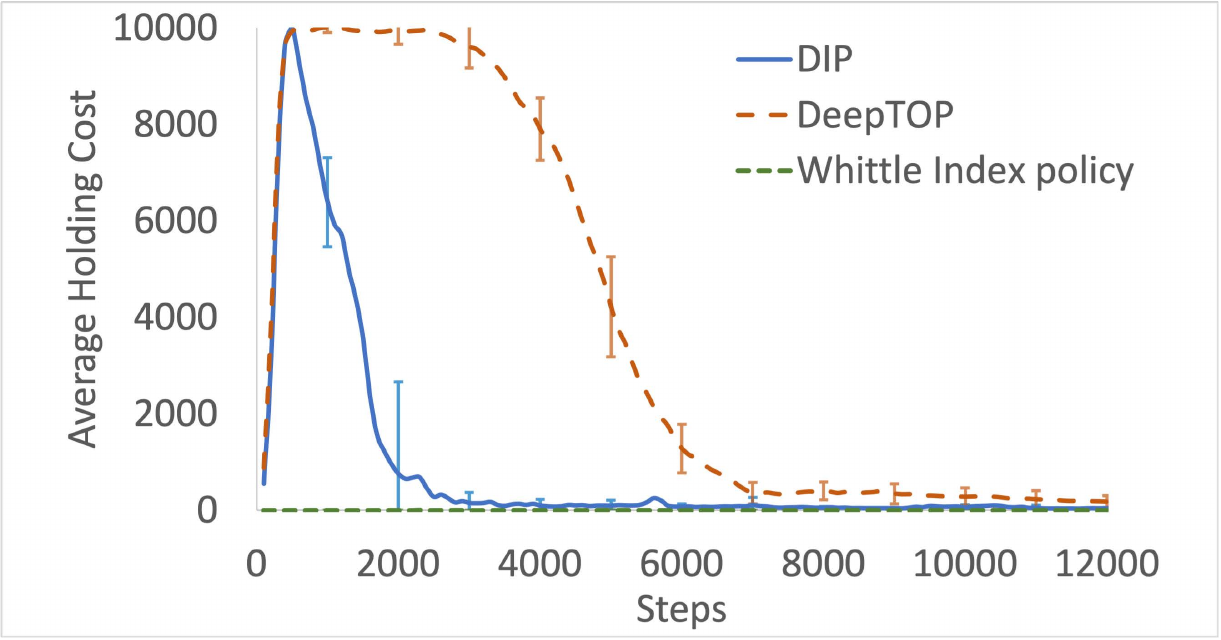}

        \label{fig:HomogeneousA34holding}
 }
    \caption{Holding cost comparison for multi-channel wireless networks with homogeneous channels} 
    \label{fig:Homogeneousholding}
\end{figure}

\subsection{Online Advertisement Placement}
%
We consider an online advertisement placement problem on social media platforms. There are three places where advertisers can display their ads: newsfeed, overhead banner, and sidebar. At each time step, the website administrator determines whether and where to display each advertisement. The consumer interest in a particular advertisement finishes if they click its link and interest may gradually recover over time. For example, someone who recently a purchased a product may not be interested in advertisement for the same product in the immediate future but their interest might gradually revive with time. The objective is to strategically display advertisements in a manner that effectively captures and sustains consumer interest.

We can  model the online advertisement placement problem as a recovering bandit, first introduced by \cite{pike2019recovering}. The objective of recovering bandits is to capture the evolving behaviors of consumers over time. The effectiveness of displaying an advertisement depends on the  elapsed time since the advertisement was last displayed and the placement. At time $t=0$, the elapsed time of all advertisements are set to 1. If the advertisement is not displayed, then elapsed time increased by 1 and if the advertisement is displayed then it is set to the 1.

We formulate recovering bandit as a MDP where each arm of the recovering bandit is the advertisement. The state of the advertisement $s_{n,t}$ is the elapsed time. To ensure the finite state space, we cap the elapsed time at 20. If advertisement $n$ is not displayed to any placement, then its wait time will increase by one, and hence 
\begin{equation}\label{eq:prob1}
P_n(s_{n,t+1}=\min\{s+1, 20\}|s_{n,t}=s,a_{n,t}=0)=1.
\end{equation}
On the other hand, if advertisement $n$ is scheduled to display on placement $h$, then its wait time will set to one, and hence
\begin{equation}\label{eq:prob1}
P_n(s_{n,t+1}=1|s_{n,t}=s,a_{n,t}=h)=1.
\end{equation}
The reward of advertisement $n$ is $R_{n}(s_{n,t},a_{n,t}) = \theta^0_{{n,h}}(1-e^{\theta^1_{{n,h}}.s_{n,t}})$, where $\theta^0_{{n,h}}$. and $\theta^1_{{n,h}}$are the hyperparameters for the placement $h$. The objective is to maximize the total long-term discounted reward of all advertisements in the system.

We consider a setup of 30 advertisements and three advertisement placement  (i.e., $N=30$, $H=3$). Each placement can accommodate up to two advertisements at a time (i.e., $C_h=2$). The hyperparameters of the first 10 advertisements are $\theta^0_{{n,1}}=1, \theta^0_{{n,2}}=3, \theta^0_{{n,3}}=5$,the next 10 advertisements are $\theta^0_{{n,1}}=5, \theta^0_{{n,2}}=1, \theta^0_{{n,3}}=3$ and the last 10 advertisements are $\theta^0_{{n,1}}=3, \theta^0_{{n,2}}=5, \theta^0_{{n,3}}=1$. Additionally, the hypeparameter $\theta^1_{{n,h}}=0.1$   for all advertisements $n$ and all placements  $h$.

Fig.~\ref{fig:HetroAdv} illustrates the simulation results for online advertisement. It is evident that DIP significantly outperforms DeepTOP. Specifically, DIP demonstrates efficient scheduling of advertisements, particularly in situations of high competition for display space. Conversely, DeepTOP tends to prioritize displaying advertisements in less favorable placements when the preferred placements are congested. This indicates that DeepTOP is limited to handling a single type of resource and does not perform effectively in multi-resource systems.
\begin{figure}

\centering
        \includegraphics[width=0.8\linewidth]{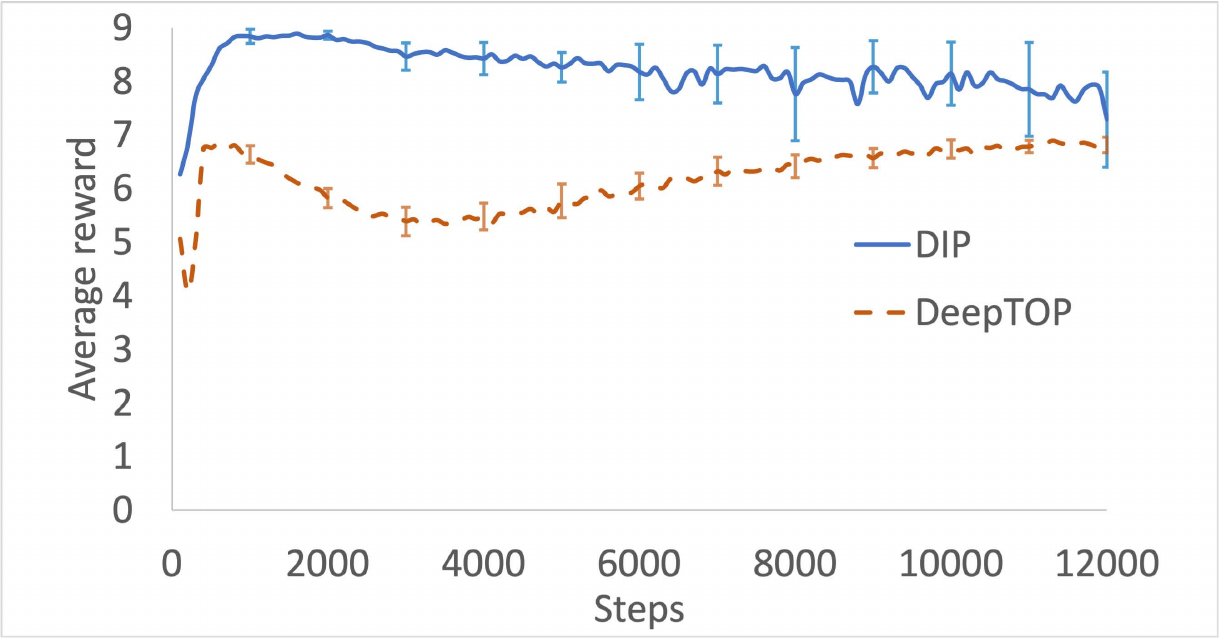}

    \caption{Average reward comparison for online advertisement placement on social media}
    \label{fig:HetroAdv}
\end{figure}

\section{Conclusion} \label{sec:conclusion}
We address the critical challenges of allocating multiple heterogeneous resources in wireless communication systems. We have derived the policy gradient theorem to learn the index. By leveraging policy gradient theorem, we have proposed DIP, an online reinforcement learning algorithm for MR-RMB. Our results show that DIP outperforms existing methods such as DeepTOP and Whittle Index policy, highlighting the limitations of Whittle index based approaches in heterogeneous multi-resource systems. We have also compared DIP algorithm with SWIM and shows that DIP learned {partial index} and dual cost correctly in heterogeneous channels setting. We have also shown that all the results of different policies converge in homogeneous channels setting, signifying that DIP has indeed learned the Whittle index correctly, as homogeneous channels mimic the behavior of a single-channel wireless network. Our findings underscore the versatility of DIP across a wide range of scheduling problems, from homogeneous to heterogeneous resource settings, when transition kernel is unknown and convoluted. We have also generalized DIP to  applications beyond multi-channels. For future study, extending the MR-RMB model to accommodate multiple conflicting objectives could enhance optimization in wireless communication systems. 
\section{Acknowledgement}
This material is based upon work supported in part by NSF under Award Numbers ECCS-2127721 and CCF-2332800 and in part by the U.S. Army Research Laboratory and the U.S. Army Research Office under Grant Number W911NF-22-1-0151.

\bibliographystyle{ACM-Reference-Format}
\bibliography{references}

\appendix
\end{document}